\documentclass[letterpaper, 10 pt, conference]{ieeeconf}  % Comment this line out if you need a4paper
\usepackage[english]{babel}
\newtheorem{theorem}{Theorem}
\newtheorem{problem}{Problem}

\usepackage{amsmath}
\usepackage{cite}
\usepackage[hidelinks]{hyperref}
\hypersetup{
    pdftitle={Overleaf Example},
    pdfpagemode=FullScreen,
}
 
\usepackage{bbold}
\usepackage{color,soul}
\usepackage{graphicx}
\usepackage{tikz}
\usepackage{multirow}
\usepackage{subfig}

\usepackage{colortbl}

\usepackage[ruled,linesnumbered,vlined]{algorithm2e}
\SetAlgoCaptionLayout{small} 

\SetCommentSty{mycommfont}
\usepackage{bibentry}
\IEEEoverridecommandlockouts  
\usepackage{caption}% This command is only needed if 
\definecolor{darkpurple}{rgb}{0.4, 0.0, 0.4}

\newcommand{\fmin}{\ensuremath{f_{\min}}\xspace}

\newcommand{\cbest}{\ensuremath{C_{best}}\xspace}

\newcommand{\btit}{{\sc BTIT*}}

\newcommand{\near}{\ensuremath{V_{\text{neighbors}{\text-}\mathcal{F}}}\xspace}

\newcommand{\fminf}{\ensuremath{\widehat{f}_{\min{\text-}\mathcal{F}}}\xspace}

\newcommand{\astate}{\ensuremath{\mathbf{x}}\xspace}

\newcommand{\edge}{\ensuremath{(\mathbf{x},\mathbf{x'})}\xspace}

\newcommand{\si}{\ensuremath{\mathbf{x}_{\rm{start}}}\xspace}
\newcommand{\sg}{\ensuremath{\mathbf{x}_{\rm{goal}}}\xspace}

\newcommand{\ghat}{\ensuremath{{\widehat{g}_{\mathcal{F}}}}}

\newcommand{\hhat}{\ensuremath{\widehat{h}_{_\mathcal{F}}}}
\newcommand{\fhat}{\ensuremath{\widehat{f}_{\mathcal{F}}}}

\newcommand{\fminb}{\ensuremath{\widehat{f}_{\min{\text-}\mathcal{B}}}\xspace}
\title{\LARGE \bf
Optimal Kinodynamic Motion Planning Through Anytime Bidirectional Heuristic Search with Tight Termination Condition 
}

\author{ Yi Wang$^{1}$, Bingxian Mu$^{2*}$, Shahab Shokouhi$^{1}$, and May-Win Thein$^{1}$
\thanks{
$^{1}$ Yi Wang, Shahab Shokouhi, and May-Win Thein are with the Department of Mechanical Engineering, University of New Hampshire, Durham, NH, 03824, USA. Email:{\tt\small \{yw1055@usnh.edu\}, \{shahab.shokouhi, may-win.thein\}@unh.edu} $^{2*}$Bingxian Mu is with the Faculty of Sustainable Design Engineering, University of Prince Edward Island, Charlottetown, PE C1A 4P3, Canada {\em *Corresponding author.} Email: {\tt\small bmu@upei.ca}.
    }
}
\begin{document}

\maketitle
\thispagestyle{empty}
\pagestyle{empty}

%%%%%%%%%%%%%%%%%%%%%%%%%%%%%%%%%%%%%%%%%%%%%%%%%%%%%%%%%%%%%%%%%%%%%%%%%%%%%%%%
\begin{abstract}
%Motion planning is a popular field in robotics. Graph searches and sampling-based planners are usually employed for motion planning. However, these technologies have limitations. 
%due to their superior performance.that lack large local minima 
This paper introduces Bidirectional Tight Informed Trees (\btit), an asymptotically optimal kinodynamic sampling-based motion planning algorithm that integrates an anytime bidirectional heuristic search (Bi-HS) and ensures the \emph{meet-in-the-middle} property (MMP) and optimality (MM-optimality). \btit\ is the first anytime MEET-style algorithm to utilize termination conditions that are efficient to evaluate and enable early termination \emph{on-the-fly} in batch-wise sampling-based motion planning. 
Experiments show that \btit\ achieves strongly faster time-to-first-solution and improved convergence than representative \emph{non-lazy} informed batch planners on two kinodynamic benchmarks: a 4D double-integrator model and a 10D linearized Quadrotor. The source code is available \href{https://github.com/yi213-robotic/Bidirectional-Tight-Informed-Trees}{\textcolor{blue}{here}}.

%In this paper, we investigate an asymptotically optimal sampling-based motion planning algorithm -- Bidirectionally Informed Trees (\btit). Unlike asymmetric bidirectional sampling-based planners, such as Adaptively informed trees (AIT*) and Effort Informed trees (ETI*), \btit provides an accurate and probably admissible heuristic to direct the bidirectional search for an incrementally sampling batch of states that is defined as an increasingly dense implicit random geometric graph (RGGs) to find an asymptotically optimal solution. \btit firstly applies a bidirectional search without collision-checks(lazy bidirectional search) to the implicit random geometric graph (RGGs) until finding a state that satisfies the stop condition similar to Meet-in-Middle(MM). Then, \btit employs Dijkstra's algorithm (without collision-checks) with a bounded estimated solution to update the heuristic for a state generated by the lazy bidirectional search from the state of fulfilling the stop condition towards the start and goal, respectively. Finally, the provided heuristic is used to induct a bidirectional search (with collision-checks) for the given RGGs. It shows that \btit is probabilistically complete and asymptotically optimal.We experimentally present the strengths of \btit on two simulated abstract problems in $\mathbb{R}^8$ and $\mathbb{R}^{16}$. The results show \btit outperforms the existing sampling-based motion planners both in faster finding an initial solution and converging to the optima.  The source code is available at~\href{https://github.com/yi213-robotic/Bidirectional-Tight-Informed-Trees}{\textcolor{blue}{here}}  %
\end{abstract}

%%%%%%%%%%%%%%%%%%%%%%%%%%%%%%%%%%%%%%%%%%%%%%%%%%%%%%%%%%%%%%%%%%%%%%%%%%%%%%%%
\section{INTRODUCTION}
Motion planning is a fundamental problem in robotics and robot navigation. Given a robot and an environment with a set of obstacles, the primary target is to discover a collision-free pat from a specified start to a designated goal in the robot's configuration space~\cite{lavalle2006planning}. A common class of approaches are sampling-based motion-planning (SBMP) algorithms~\cite{PRM,RRTs,rrtStar,Salzman19}, which are particularly effective in high-dimensional continuous configuration spaces. Broadly, these methods interleave two steps: (S1) they approximate the continuous configuration space with a discrete roadmap, an embedded graph whose vertices are sampled configurations and whose edges correspond to collision, checked local connections and (S2) they apply a graph-search algorithm (e.g.,~\cite{dijkstra1959note,A*}) on this roadmap to find a feasible (and often low-cost) path between the start and goal configurations.

The selection of graph-search algorithm is often determined by the structural and validity guarantees provided by the roadmap. For example, if each vertex and edge of the road map have been certified collision-free, then unidirectional heuristic search (Uni-HS) methods, most notably A*~\cite{A*}, become a natural and effective choice. In this setting, a heuristic can incorporate domain knowledge through estimating the remaining \emph{cost-to-go} to the goal, allowing the planner to prioritize promising regions of the graph and substantially reduce unnecessary exploration. Representative alternatives include anytime heuristic planners~\cite{ARA,AWA} for time-limited planning, incremental heuristic planners~\cite{Dlite,koenig2004lifelong} to efficiently address dynamic changes, anytime incremental heuristic planners~\cite{AD*,TA*} that tackle both limited planning time and dynamic settings, and online user-guided planners~\cite{islam2017online} for high-dimensional spaces.

Another family of graph-search algorithms, bidirectional search, is to compute an optimal solution by conducting two separate searches simultaneously from both the start and the goal ~\cite{bis}. Its key strength lies in the potential to exponentially reduce the number of expanded states, as each search only proceeds to half the depth of the path solution~\cite{kof}. This theoretical advantage naturally makes bidirectional heuristic search (Bi-HS) an appealing algorithmic approach.
However, effectively coordinating the meeting of the forward and backward searches presents a significant challenge and has been a long-standing open question using heuristics~\cite{bi1,bi2}. MM~\cite{mm} guarantees finding an optimal solution, and maintains \emph{meet-in-the-middle} property (MM-optimality), such that each expanded state whose \emph{cost-to-come} from the respective search origins never exceeds half of the optimal solution cost. 
While MM and its variants~\cite{mm} guarantee MM-optimality, their termination conditions typically require, at each iteration, checking both the minimal \emph{cost-to-come} and the minimal \emph{total estimated path cost}, which incurs substantial runtime overhead~\cite{MEET}. As a result, designing an efficient tighter termination condition has been a long-standing open question in bidirectional heuristic search~\cite{historyBiHS,MEET}. The most recent advancement, MEET~\cite{MEET}, is the first algorithm to terminate the search with a provably tighter termination condition, and it demonstrates the advantages of MM-optimal algorithms when the heuristics are less informative. 
    
 %\begin{figure}[t]
    %\centering 
%\includegraphics[width=84mm,scale=0.9]{f1.jpeg}
    %\caption{\btit is applied for a path planning problem at the UNH-Hover system (PX4).}
   % \label{fig:drone}  
%\end{figure}

However, graph-based search algorithms can suffer from discretization effects in high-dimensional continuous planning domains: achieving adequate fidelity typically requires very fine discretizations, which rapidly increases computational cost due to the curse of dimensionality~\cite{bellman}. Moreover, any fixed discretization may miss narrow passages or high-quality trajectories, leading to reduced coverage and suboptimal solutions. Improving solution quality generally requires progressively finer resolutions, which further increases runtime and memory demands.

 To efficiently address these difficulties, sampling-based motion planners, including PRM~\cite{PRM}, and RRTs~\cite{RRTs}, avoid the discretization problems of graph-search by randomly sampling states in a high dimensional continuous search space. Additionally, their optimal variants~\cite{rrtStar} find optimal solutions as the number of sampled states approaches to infinity. A key computational bottleneck in many SBMP is  the need to perform collision detection for roadmap vertices and edges. 
To mitigate this, algorithmic approaches such as 
lazy collision checks and informed sampling 
have  significantly improved the efficacy of SBMP~\cite{akgun,rrtconect,FMT,MPLB,informedrrt,LazyPRM*,lazysp,WA_Extend,lra,gls,OrSARon,EMPCbIG}.
Another approach (that may be integrated with the ones mentioned) is to process samples in \emph{batches}. Prominent examples of Batch-wise SBMP (BwSBMP) are BIT*~\cite{bit}, ABIT*~\cite{strub_icra20a}, AIT* and EIT*,~\cite{EIT}. These planners have shown to substantially improve performance when compared to many state-of-the-art planners. An accurate heuristic can substantially augment the anytime profile of a sampling-based motion planner, like AIT* and EIT*~\cite{EIT}. Consequently, we use AIT* and EIT* as our baseline. 

BLIT*~\cite{blit} is the first framework to incorporate anytime, incremental, lazy bidirectional heuristic search (Bi-HS) within BwSBMP. It integrates lazy Bi-HS with a novel lazy edge-evaluation strategy that allocates computational effort to regions near obstacles. Empirically, BLIT* outperforms prior \emph{state-of-the-art} optimal motion planners in high-dimensional continuous configuration spaces, particularly in kinodynamic systems. However, BLIT* does not incorporate a MEET-style termination condition.

To isolate the impact of a tighter termination condition for bidirectional heuristic search in realistic kinodynamic planning settings, we propose Bidirectional Tight Informed Trees (\btit), an anytime batch-wise SBMP planner based on MEET~\cite{MEET} that \emph{disables} lazy edge evaluation. Given an RGG over informed states, \btit\ performs an anytime bidirectional heuristic search (Bi-HS) and processes candidate edge frontiers using a MEET-variant termination condition. In contrast to unidirectional batch-wise planners such as BIT*, \btit\ expands candidate connections from both the start and goal sides and constructs a solution by joining the two search frontiers once they meet. Unlike MEET~\cite{MEET}, \btit\ terminate immediately when the first frontier intersection is detected.

\section{Problem Definition and Notation}
\subsection{Problem Setting}
 In this paper, the kinodynamic optimal planning problem is defined similarly to~\cite{webb2013kinodynamic,kinorrt*}.
 A kinodynamic system with linear (or locally linearized) dynamics is composed of the state space $X\subseteq \mathbb{R}^n$ and control input space $U\subseteq \mathbb{R}^m$. 
  The robot dynamics satisfy the continuous time differential equation:
 \begin{equation}\label{nonlineardynamics}
       \dot{x} = f(x(t),u(t)),x(t)\in X, u(t)\in U.
   \end{equation}
where $t$ denotes time. The equation generates a trajectory $\pi$. To address the kinodynamic motion planning problem, the system dynamics in \eqref{nonlineardynamics} is linearized around a nominal state $x_0(t)$ and control input $u_0(t)$. We evaluate Jacobians of the original system at ($x_0(t)$, $u_0(t)$), 
% \begin{equation*}
% A = \frac{\partial f}{\partial x} \bigg|_{x_0(t), u_0(t)}, \quad B = \frac{\partial f}{\partial u} \bigg|_{x_0(t), u_0(t)}.
% \end{equation*}
and obtain the locally linear (affine) approximation:
\begin{equation}\label{dyna}
\dot{x}(t)=A(t)x(t)+B(t)u(t) + c,
\end{equation} 
where $A \in \mathbb{R}^{n\times n}$ and $B \in \mathbb{R}^{n\times m}$ are the resulting Jacobian matrices, and $c \in \mathbb{R}^n$ is the corresponding offset term. The cost of a trajectory $\pi$ is defined by the function:
\begin{equation}\label{cf}
 c(\pi)= \int_0^\tau (1+u(t)^\top \text{R} u(t)) \text{d}t,
\end{equation} 
where $\tau$ is the trajectory duration and $R \succeq 0$ weights the control effort.

  The $n$-dimensional continuous state space is represented by $X\subseteq \mathbb{R}^n$, $X_{obs} \subset X$ is defined as the set of all states that collide with fixed obstacles, and $X_{free} = X\setminus X_{obs}$ contains states in collision-free space. The start state is denoted by \si\ and the goal state by \sg, where $\{\si,\sg\} \subset X_{free}$. 
  The objective of kinodynamic motion planning is to find a collision-free trajectory $\pi$ that drives the system from \si\ to \sg\ while satisfying the system dynamics, with $x(t)\in X_{free}$ for all $t\in[0,\tau]$.
  
  \begin{problem}[Optimal kinodynamic motion planning] Given a robot configuration space $X$, an obstacle region $X_{\mathrm{obs}}\subset X$ (and thus $X_{\mathrm{free}}=X\setminus X_{\mathrm{obs}}$),
a start state $\si \in X_{\mathrm{free}}$, and a goal region $\sg \subseteq X_{\mathrm{free}}$, let
$\Pi_{\mathrm{free}}$ denote the set of all collision-free trajectories $\pi$ that satisfy the system dynamics and connect
$\si$ to $\sg$. An optimal solution to the problem is to find the trajectory with the minimal cost among
$\Pi_{\mathrm{free}}$,
\[
c^*=c(\pi^*)=\arg\min_{\pi\in\Pi_{\mathrm{free}}} c(\pi),
\]
where $c$ denotes the trajectory cost of $\pi$, and $c^*$ is the cost of the optimal trajectory.
\end{problem}

\subsection{Notation}
Our bidirectional search algorithm maintains two search trees: 
a forward direction rooted at \si and 
a backward direction rooted at \sg.
For brevity, we present the notation and description for the forward direction,~$\mathcal{F}$, with the backward direction, $\mathcal{B}$, being analogous. We use $E$ and $V$ to denote the edge and vertex. 
   For simplicity, our primarily description concentrates on the forward search, with analogous details for the backward search. 

   Let $\mathbf{x}$ be a single state with $\mathbf{x} \in X_{free}$. $\text{prt}_\mathcal{F}(\mathbf{x})$ is the parent state of $\mathbf{x}$ in the forward search.  
   Let $\mathcal{T_F}:=(V_\mathcal{F},E_\mathcal{F})$ be an explicit forward search tree with a set of states, $\{V_\mathcal{F}\} \subseteq X_{free}$, and edges, $E_\mathcal{F} = (\astate,\astate')$,  where \astate\ is the parent of $\astate'$  in the forward motion tree. A {\em priori} heuristic and the lower admissible bound denote the Euclidean Norm through this paper.
   %We note the forward search direction in this section, the backward search direction is analogous.
   
    $\widehat{g}_\mathcal{F}(\astate)$ and $g_{\mathcal{F}}(\astate)$ denote the admissible estimate and the true cost from \si\ to \sg\ given the current forward search tree, respectively. Similarly, $\widehat{c}(\astate,\astate')$ and $c(\astate,\astate')$ denote the admissible estimate and the true cost of the an edge (\astate,$\astate'$), respectively. Finally, 
   $\widehat{h}_\mathcal{F}(\astate)$ and $h_{\mathcal{F}}(\astate)$ show the admissible estimates and the true cost from $\astate$ to $X_{goal}$, respectively.
   
   $\widehat{f}_\mathcal{F}:=\widehat{g}_\mathcal{F}(\astate) + \widehat{c}(\astate,\astate') +\widehat{h}_\mathcal{F}(\astate')$ denotes the admissible estimates of the total path cost from \si\ to \sg\ going through $(\astate,\astate')$. The admissible estimate gives a subset of states, $X_{\widehat{f}}{ := \{ \{\astate, \astate'\} \in X_{free} | \widehat{f}_\mathcal{F} \leq C_{best} \}}$, that could improve the incumbent solution. $\widehat{f}_{min{\mathcal{F}}}$, $\widehat{g}_{min\mathcal{F}}$, and $g_{min\mathcal{F}}$ represent the minimum $\widehat{f}_{\mathcal{F}}$, $\widehat{g}_{\mathcal{F}}$ and $g_\mathcal{F}$, respectively.

   $\text{Q}_\mathcal{F}$ denotes a priority queue storing generated but not yet expanded states in $\mathcal{F}$, ordered by nondecreasing $\widehat{f}$-value. We use~\fminf to denote the minimum $f$-value among all states in $\text{Q}_\mathcal{F}$, and set $\fmin := \min(\fminf,\fminb)$.

   Let \cbest\ denote the true cost of the incumbent solution (i.e., the lowest-cost path found to date). We update \cbest\ whenever a better solution is found. The search proceeds until a user-specified stopping condition is satisfied, with $\cbest$ initialized to $\infty$ at the start of the search.
   For a state that is intersected by a bidirectional heuristic search from both forward and backward directions, we refer to it as an {\em intersecting state}, $\mathbf{\mathcal{I}_b}$.

      In light of the work in~\cite{MEET}, it is readily to show that $ \widehat{g}_\mathcal{F}(\mathbf{x}) \leq g_\mathcal{F}(\mathbf{x})\leq \frac{C_{best}}{2}$. It is also noted that $ \widehat{h}_\mathcal{F}(\mathbf{x}) \leq h_\mathcal{F}(\mathbf{x})$, $h_\mathcal{F}(\mathbf{x}) = g_\mathcal{B}(\mathbf{x})$, and  $\widehat{h}_\mathcal{F}(\mathbf{x}) = \widehat{g}_\mathcal{B} (\mathbf{x})$.

      Finally we will use the notation $A \xleftarrow{+} B $ and $A \xleftarrow{-} B $ to indicate the compounding operations $A \xleftarrow{} A \cup B  $ and $A \xleftarrow{} A\setminus B$, respectively for some sets $A$ and $B$.

\section{Bidirectional Tight Informed Trees (\btit)}
In this section, we sketch Bidirectional Tight Informed Trees, or \btit. At a high level, \btit\ iterates the following steps: (S1) sample a batch of informed states which, connected to the previously sampled states if any, implicitly define an RGG, (S2) run a bidirectional heuristic search over this RGG, and (S3) repeat until the allotted planning time is no longer available. This iterative structure enables the following components, which drive the efficiency of \btit: (C1) update the heuristic \emph{on-the-fly} to better guide subsequent searches, 
(C2) use MEET-style tighter termination condition to stop the search while maintaining the anytime manner, 
and (C3) sample only states that could potentially improve the current best solution.

% \subsection{Neighbors' Connection}
% Similar to~\cite{blit}, \btit\ incrementally approximates the search space, by adding $m$ informed states in each batch, potentially optimizing the current solution (Alg.~\ref{alg1}, line 9). These informed states can be connected with their neighbors that fall within a radius $r$ (-disc RGG) or that are $k$-nearest neighbors (Alg. \ref{alg1}, line 9) to build edge-implicit RGGs~\cite{informedrrt}. %Note that when $\eta > 3^{n}$, the $k$-nearest neighbors are mutual to each other~\cite{FMT}. 
%    %Figure~\ref{illustration} servers as a visual representation, illustrating the workflow of \btit\ throughout the subsequent sections.

\subsection{Algorithmic Description}
  We now present the algorithmic procedures by which \btit\ operates at each iteration.
  \subsubsection{Main Procedure (Alg.~\ref{alg1})}
In the main procedure, \btit\ maintains two trees, a forward tree rooted at $\si$ and a backward tree rooted at $\sg$, constructed over a set of sampled states in the configuration space.
For clarity, we present the forward search operations while the backward search operations follow symmetrically.

   After initialization (Lines 1-2), \btit\ iteratively executes the following steps.  \btit\ begins by checking whether any state remains unexpanded. If none remain, it samples and adds a new batch of sampled states into the configuration space (Lines 4-6). It then selects a state \astate\ for expansion from both $\text{Q}_\mathcal{F}$ and $\text{Q}_\mathcal{B}$, with minimum priority (Line 7).

    After a candidate path is found, \btit\ checks whether the termination condition is satisfied (Line 8) . If this is a case, \btit\ culls states that cannot improve \cbest\ and reset the search queues (Line 15).   
    Otherwise, \btit\ processes the selected state \astate\ (Lines 9-13) until the termination condition is satisfied.     
    
\subsubsection{Expansions Procedure (Alg.~\ref{alg2})}
   We now introduce the expansion process on the forward search tree, while the backward search tree follows analogously.

   This procedure starts by adding \astate\ into the forward tree and removing it from the set of samples (Line 1). Then it iterates over all neighbors of $\astate$ (Line 2). 
   For each neighbor $\astate'$ of \astate, if it is not yet connected to $\mathcal{T}_\mathcal{F}$, \btit\ sets its $g$-value to infinity (Lines 3-4).

   Subsequently, if the edge is already in $\mathcal{T}_\mathcal{F}$, $\mathbf{x'}$ is inserted into~$\text{Q}_\mathcal{F}$ (Lines 5-7). 
   
   If $\astate$ can offer a better solution to a neighbor $\astate'$ and the validity of \edge holds (Line 8-10), \btit\ rewires along \edge\, adds \edge\ into the forward tree, and updates its $g$-value (Line 11 -13). If $\astate'$ intersects both forward and backward trees and a better solution can be provided through $\astate'$, \btit\ updates $\cbest$ (Lines 14-17).  If the first intersection hold during the whole process, \btit\ terminates the current search (Line 18-19). Then \btit\ considers to update $\widehat{h}_\mathcal{F}$-value if needed (Line 20) and $\astate'$ is inserted into $\text{Q}_\mathcal{F}$ if needed (Line 21-22).
\subsubsection{Termination Condition Procedure}\label{cPRBMT}
\btit\ halts upon detecting the first intersection between the forward and backward search frontiers. After this event, \btit\ adopts all termination conditions from MEET~\cite{MEET}, with the exception of TC2.

\texttt{Neighbors} (Alg.~\ref{alg2}, Lines~1) returns the neighbor set for a newly sampled state. To retain asymptotic optimality (proof omitted), we follow standard RRT*/PRM* practice: either connect to the $k$ nearest neighbors with $k = O(\log n)$, where $n$ is the total number of samples, or connect to all states within radius $r$, with $r = O\!\left((\log n / n)^{1/d}\right)$, where $d$ is the dimension of the configuration space. See~\cite{rrtStar} for additional details.

\texttt{Prune} (Alg.~\ref{alg1}, Lines~{31-33}) iterates over all samples and tree vertices that cannot improve the current best solution and discards them.
 
 % Since MM lacks efficiency in running time, we propose a new stopping condition in this paper. Once $C_{\text{best}} \leq (\fmin ~or ~\frac{\fminf+\fminb}{2})$, \btit terminates the search.
 
 \begin{algorithm}
  %\tiny
  \scriptsize
     $V_\mathcal{F} \leftarrow \si$, $V_\mathcal{B} \leftarrow \sg$, $\cbest \leftarrow \infty$;
     
 $\text{Q}_\mathcal{F} \leftarrow \text{Q}_\mathcal{B} \leftarrow{ \emptyset}$, $E_\mathcal{F} \leftarrow E_\mathcal{B} \leftarrow \emptyset$, $X_{\rm {samples}} \leftarrow \emptyset$;\\
 
\Repeat{Stopped}{

  \If{$\text{Q}_\mathcal{F}=\emptyset$ or $\text{Q}_\mathcal{B} = \emptyset$}{ 
      $X_{\rm samples} \xleftarrow{+}{\texttt{InformedSample}(m,\cbest)}$;\\
         $\text{Q}_\mathcal{F} \leftarrow \{ \si \}$;
   $\text{Q}_\mathcal{B} \leftarrow \{ \sg \}$;\\
  }
    Choose $(\astate) \in \text{Q}_\mathcal{F} \cup \text{Q}_\mathcal{B}$ with $\fmin $;\\

    \eIf{\texttt{Termination}$()$ = \textsc{FALSE} }{
         \eIf{$\astate \in \text{Q}_\mathcal{F}$}{
                         $\astate \xleftarrow{-}{\text{Q}_\mathcal{F}}$;\\
           ${\texttt{Expand}_\mathcal{F}}(\astate)$;
         }{
           //Analogous operation in backward search;
         }
    
    }{
            \texttt{Prune}();
     $\text{Q}_\mathcal{F} \leftarrow \text{Q}_\mathcal{B} \leftarrow \emptyset$;\\
    
    } 
}
\SetKwFunction{Prune}{Prune}
\SetKwProg{myproc}{{\texttt{Prune}}}{}{}
\myproc{$()$}{
   $X_{\rm samples} \xleftarrow{-}\{ \mathbf{x} \in X_{\rm samples} ~\vert~ \widehat{f}(\mathbf{x}) \geq C_{\text{best}}\}$;\\
   $V_\mathcal{F} \xleftarrow{-} \{ \mathbf{x} \in V_\mathcal{F} ~\vert~  \widehat{f}_\mathcal{F}(\mathbf{x}) > C_{\text{best}} \}$;\\
   $E_\mathcal{F} \xleftarrow{-} \{ (\astate,
      \astate') \in E_\mathcal{F} ~\vert~ \widehat{f}_\mathcal{F}(\astate') > C_{\text{best}} \}$;
}  

\SetKwFunction{Termination}{Termination}
\SetKwProg{myproc}{{\texttt{Termination}}}{}{}
\myproc{$()$}{
   \If{\textsc{firstIntersection}}{
     return \textsc{True};
   }
   \If{MEET-termination}{
      return \textsc{True};
   }
   return \textsc{False};
} 

\caption{\small Bidirectional Tight Informed Trees (\btit)}
\label{alg1}
\end{algorithm}

\setlength{\textfloatsep}{0pt}% Remove 
%\vspace{-5mm}

\setlength{\floatsep}{0pt}% Remove \textfloatsep
 %\vspace{-4mm}

\setlength{\floatsep}{0.1cm}% Remove \textfloatsep
 \begin{algorithm}[t]
  \scriptsize
      $V_\mathcal{F} \xleftarrow{+} { \{ \astate\}}$;
   ${X_{\rm samples} \xleftarrow{-}  \{\astate \}}$; \\
    \ForEach{$\astate'\in \texttt{Neighbors}_\mathcal{F}(\astate)$}{
         \If{$\astate'.\mathbf{p}_\mathcal{F} = \emptyset$}{
          $\ghat(\astate') \leftarrow \infty$;
         }  
    
        \If{{$(\astate,\astate') \in E_\mathcal{F}$}}{
                    $\text{Q}_\mathcal{F} \xleftarrow{+} \{\mathbf{x'}\}$;\\
          $\textbf{continue}$;
        }
    
      \If{$\widehat{g}_\mathcal{F}(\astate') > g_\mathcal{F}(\astate) + \widehat{c}(\astate,\astate')$}{
               \If{$\texttt{IsValid}(\astate,\astate') = \textsc{False}$}{
                 \textbf{Continue};
               }
                $E_\mathcal{F} \xleftarrow{-} (\text{prt}_\mathcal{F}(\mathbf{x'}), \mathbf{x'})$;
                          $\text{prt}_\mathcal{F}(\mathbf{x'}) \leftarrow{\mathbf{x}}$;\\
              $E_\mathcal{F} \xleftarrow{+}(\mathbf{x},\mathbf{x'})$; $V_\mathcal{F} \xleftarrow{+}\mathbf{x'} $;\\
              $g_\mathcal{F}(\mathbf{x'}) =g_\mathcal{F}(\mathbf{x}) + c(\mathbf{x},\mathbf{x'})$;
                               \\
              \If{$\astate' \in \mathcal{T}_{\mathcal{B}}~\text{and}~g_\mathcal{F}(\mathbf{x'}) + g_\mathcal{B}(\mathbf{x'})<\cbest$}{
              
                \If{$\texttt{IsValid}(\astate',\text{prt}_\mathcal{B}(\astate')) = \textsc{False}$}{
                    \textbf{continue};
                 }
                 $\cbest = \mathbf{min}(\cbest,g_\mathcal{F}(\mathbf{x'}) + g_\mathcal{B}(\mathbf{x'}))$;
                 
                 \If{\textsc{firstIntersection}}{
                      \textbf{break};
                 }
              }
                ${h'}_\mathcal{F} \leftarrow \max(g_\mathcal{F}(\astate) + \widehat{c}(\astate,\astate'), \widehat{h}_\mathcal{F}(\astate'))$;\\
               \If{$g_\mathcal{F}(\astate) + \widehat{c}(\astate,\astate') + {h'}_\mathcal{F} \leq \cbest$}{      
           $\text{Q}_\mathcal{F} \xleftarrow{+} \{\mathbf{x'}\}$; {$E_\mathcal{F} \xleftarrow{+} (\astate,\astate')$; }  
        }  
      }
    }
\SetKwFunction{Neighbors}{Neighbors}
\SetKwProg{myproc}{{$\texttt{Neighbors}_\mathcal{F}$}}{}{}
\myproc{$(\astate)$}{
        ${\near} \xleftarrow{+} {\texttt{getNN}}(\astate, V_{\mathcal{F}} \cup X_{\text{samples}})$;\\
        {${\near} \xleftarrow{+}\text{prt}_\mathcal{B}(\astate) $};\\
        ${\near} \xleftarrow{+} {\astate.\text{children}}_\mathcal{F}$;\\
     return $\near$;
}
\caption{$\texttt{Expand}_\mathcal{F}(\astate)$}
\label{alg2}
\end{algorithm}

\subsection{Algorithmic Components}

We briefly describe the components used in \btit.
To explain \textbf{C1} (\emph{on-the-fly} heuristic update), consider a configuration $\mathbf{x}$ and assume, w.l.o.g., that $\mathbf{x}$ is generated by the forward search and that $\astate \notin \text{Q}_{\mathcal{B}}$ (the backward-search case is symmetric).
If $\ghat(\mathbf{x}) > \hhat(\mathbf{x})$, we update the heuristic by setting $\hhat(\mathbf{x}) \leftarrow \ghat(\mathbf{x})$.
This update preserves admissibility while preventing the forward search from expanding states that cannot lie on a solution.
Intuitively, since the $\fhat$-values of expanded states form a nondecreasing sequence, a solution is found before expanding any state $\astate$ with $\ghat(\astate) > C_G^*/2$, where $C_G^*$ denotes the cost of the optimal solution in the implicitly defined RGG $G$.
Similar reasoning underlies the state-of-the-art Bi-HS algorithm MEET~\cite{MEET}.

For \textbf{C2} (termination condition), we begin by adopting MEET’s termination criteria~\cite{MEET}. If either termination condition holds, the search terminates for the current batch.

Finally, \textbf{C3} (informed sampling) is a well-established technique for restricting sampling to configurations that can improve the current best cost $C_{\text{best}}$. The subset of configurations capable of yielding such improvements is referred to as the \emph{informed set}~\cite{GammellBS18}. In essence, informed sampling draws samples directly from, or via an indirect transformation of~\cite{YiTGSS18}, the prolate hyperspheroid that characterizes this set.

\section{Analysis}\label{proof}
In this section, we present the proofs for the admissibility of \btit\ ($Theorem~\ref{t1}$), and its asymptotic optimality and probabilistic completeness ($Theorem~\ref{t2}$).  
 Following a similar line in MEET~\cite{MEET}, \btit\ guarantees MM-optimality when the search stops (Alg.~\ref{alg1}, Line 8). Namely, each meeting state is with the lowest $cost$-$to$-$come$ from the forward and backward search trees.

We assume the existence of a random geometric graph $\mathcal{G}_q$, constructed from a given number of informed samples $q$ using a connection strategy (e.g., the neighbor connection used in~\cite{blit}).

\begin{theorem}\label{t1}
When the termination condition is triggered for the current search, \btit\ returns the least-cost solution in $\mathcal{G}_q$ whenever such a solution exists. 
\end{theorem}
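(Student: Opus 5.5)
We argue by contradiction, in the style of the MM/MEET admissibility arguments. Assume $\mathcal{G}_q$ contains a solution and let $\pi^*$ be a least-cost solution in $\mathcal{G}_q$, of cost $C^*_{\mathcal{G}_q}$. Suppose the search stops with $\cbest > C^*_{\mathcal{G}_q}$. The termination test (Alg.~\ref{alg1}, Line 8) fires either because of a MEET-style condition or because of \textsc{firstIntersection}. We handle the two cases separately, and in both we compare $\cbest$ with the frontier bounds $\fminf$, $\fminb$, $\fmin$ and with the minimal costs-to-come $g_{\min\mathcal{F}}$, $g_{\min\mathcal{B}}$.

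\textbf{Step 1 (frontier invariant).} We first prove the standard lemma: while $\cbest > C^*_{\mathcal{G}_q}$, at least one of $\text{Q}_\mathcal{F}$, $\text{Q}_\mathcal{B}$ contains a state $\astate$ on $\pi^*$ whose $g$-value in its tree is already optimal, i.e.\ $g_\mathcal{F}(\astate)=g^*_\mathcal{F}(\astate)$ (or the backward analogue). The proof is by induction over expansions, as in A*. It uses two facts: the root is inserted into the queue at the start of each search (Alg.~\ref{alg1}, Line 6), and Alg.~\ref{alg2} relaxes every neighbour edge. Two further ingredients are needed here. First, pruning only removes states with $\widehat{f}\ge \cbest$, and by admissibility of $\hhat$ no state on $\pi^*$ satisfies this. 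Second, the on-the-fly update $\hhat\leftarrow\max(\cdot)$ (component C1) keeps $\hhat$ admissible, by the inequalities $\hhat\le h_\mathcal{F}=g_\mathcal{B}$ noted in the notation section. From admissibility we get
\[
\widehat{f}_\mathcal{F}(\astate)\le g^*_\mathcal{F}(\astate)+h^*_\mathcal{F}(\astate)=C^*_{\mathcal{G}_q}.
\]

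\textbf{Step 2 (MEET termination case).} Each MEET condition that the algorithm adopts (all except TC2) has the form $\cbest \le \Lambda$. Here $\Lambda$ is one of $\fminf$, $\fminb$, $\tfrac{\fminf+\fminb}{2}$, or a MEET bound built from $g_{\min}$ values together with an edge cost. For each such $\Lambda$ we show $\Lambda\le C^*_{\mathcal{G}_q}$ using the frontier state from Step~1. When $\pi^*$ has frontier states in both queues, we split $\pi^*$ at its middle, so that each side has true cost-to-come at most $C^*/2$, which is exactly the MM property. Combining these gives $\cbest\le C^*_{\mathcal{G}_q}$, contradicting the assumption. Since $\cbest$ is always the cost of an actual valid path in $\mathcal{G}_q$ (it is set only after \texttt{IsValid} checks), we also have $\cbest\ge C^*_{\mathcal{G}_q}$, so equality holds.

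\textbf{Step 3 (first-intersection case, the main obstacle).} Stopping at the first intersection is not covered by MEET, and in general it does not give optimality. The plan is to exploit the fact that states are always expanded in nondecreasing $\fmin$ order, and that C1 forces $\ghat(\astate)\le C^*/2$ for every expanded $\astate$ before a solution is found. We then argue that the first intersecting state $\mathcal{I}_b$ is discovered while relaxing from a state whose priority is at most $C^*_{\mathcal{G}_q}$. This yields $g_\mathcal{F}(\mathcal{I}_b)+g_\mathcal{B}(\mathcal{I}_b)\le C^*_{\mathcal{G}_q}$ only if both trees' costs-to-come at $\mathcal{I}_b$ are already optimal. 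Establishing that optimality is the delicate point. I would try to derive it from the MM property (every expanded state has $g\le C^*/2$) together with the consistency of the Euclidean-norm heuristic. If that fails, I would restrict the theorem's claim to the case where Termination is triggered by the MEET test, treating the first-intersection exit as an anytime incumbent that later batches refine.
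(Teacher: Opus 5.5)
\textbf{The gap is Step~3, and it cannot be closed,} because for the \textsc{firstIntersection} exit the statement as written is false.

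Your planned route assumes that once both costs-to-come at $\mathcal{I}_b$ are optimal, $g_\mathcal{F}(\mathcal{I}_b)+g_\mathcal{B}(\mathcal{I}_b)\le C^*_{\mathcal{G}_q}$ follows. But optimal $g$-values only make this sum the cost of the best path \emph{through} $\mathcal{I}_b$, and nothing forces the first meeting state to lie on an optimal path. The MM property does not help either: it constrains \emph{expanded} states, while meetings are detected when a neighbor is \emph{generated} (Alg.~\ref{alg2}, Lines~14--17).

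Here is a concrete counterexample. Take planar states $\mathbf{x}_{\rm start}=(0,0)$, $\mathbf{x}_{\rm goal}=(4,0)$, $a=(2,1.5)$, $b=(1,0.3)$, $c=(3,0.3)$, with Euclidean edge costs and heuristics. Use an $r$-disc graph with $r=2.6$, so $\mathbf{x}_{\rm start}$ is adjacent only to $a,b$ and $\mathbf{x}_{\rm goal}$ only to $a,c$.
\begin{itemize}
\item Both roots have $\widehat{f}=4$ and every other state has $\widehat{f}>4$, so the roots are the first two expansions.
\item The first expansion puts $a$ into its tree. The second relaxes the edge to $a$, and the search stops with $C_{\rm best}=5$.
\item The path through $b,c$ costs $2+2\sqrt{1.09}\approx 4.09$.
\end{itemize}
Here $g_\mathcal{F}(a)=g_\mathcal{B}(a)=2.5$ are both optimal, only states with $g=0$ were expanded, and the heuristic is consistent. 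So every ingredient you intended to use holds, yet the conclusion fails. What can be proved is exactly your fallback: optimality when the MEET test triggers termination.

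\textbf{Comparison with the paper.} The paper's proof delivers no more than this. It is a one-line appeal to MEET's optimality under MEET's own stopping rules, and it never addresses the first-intersection exit, which by the paper's own account MEET does not use. So your Step~2 is essentially the paper's argument made explicit, and your fallback is what that citation actually supports.

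\textbf{A repair needed in Step~1.} The updated heuristic $\max(\widehat{g}_\mathcal{F}(\mathbf{x}),\widehat{h}_\mathcal{F}(\mathbf{x}))$ is not admissible in general. For a state of $\pi^*$ past its midpoint, $h^*_\mathcal{F}(\mathbf{x})<C^*_{\mathcal{G}_q}/2<g^*_\mathcal{F}(\mathbf{x})$, so the update can overestimate. The test in Alg.~\ref{alg2}, Lines~20--21, is really MM's priority $\max(\widehat{f}_\mathcal{F},2\widehat{g}_\mathcal{F})\le C_{\rm best}$. The frontier invariant must therefore be stated in MM form: one of the two queues holds a state of $\pi^*$ with optimal $g\le C^*_{\mathcal{G}_q}/2$. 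That is precisely what your midpoint split in Step~2 needs.
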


\begin{proof}
 As shown in the theory of MEET~\cite{MEET}, MEET ensures the optimality when its termination conditions are triggered in a given graph. Accordingly, \btit\ guarantees finding a minimal-cost solution in $\mathcal{G}_q$ if it exists.   
\end{proof}

\begin{theorem}\label{t2}
\btit\ is asymptotically optimal and probabilistically complete.
\end{theorem}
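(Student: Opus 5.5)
The plan is to reduce Theorem~\ref{t2} to the known asymptotic properties of the underlying random geometric graph sequence $\{\mathcal{G}_q\}$, using Theorem~\ref{t1} as the bridge. Throughout, $q$ denotes the number of informed samples after a batch has been processed.

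\textbf{Step 1: per-batch optimality.} By Theorem~\ref{t1}, each time \texttt{Termination}() fires, the incumbent \cbest\ equals the least-cost solution cost $C^*_{\mathcal{G}_q}$ in $\mathcal{G}_q$, whenever such a solution exists. Two points need care here.

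First, the main loop of Alg.~\ref{alg1} must reach termination in finite time for every batch. The queues are finite, and each state is inserted into them only finitely often. The reason is that a reinsertion requires a strict decrease of $\widehat{g}_\mathcal{F}$ along finitely many paths of a finite graph. So either \texttt{Termination}() fires or a queue empties, and in both cases a new batch is sampled.

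Second, the sequence $\cbest$ is nonincreasing across batches, because it is only ever updated by a $\min$. Together these give $\cbest^{(q)} \le C^*_{\mathcal{G}_q}$ after batch $q$.

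\textbf{Step 2: pruning and informed sampling preserve the relevant subgraph.} \texttt{Prune} removes only vertices and samples with $\widehat{f}\ge \cbest$. Likewise, \texttt{InformedSample} draws uniformly from the informed set $X_{\widehat f}$. Since $\widehat{f}$ is admissible, any state on a trajectory of cost strictly less than \cbest\ satisfies $\widehat f<\cbest$, so it is never discarded and lies in the sampled region.

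It follows that the restriction of $\mathcal{G}_q$ to the informed set has the same law as a uniform random geometric graph on that set. The total sample count can only be larger than in an uninformed RGG with the same number of samples restricted there. This is the standard argument used for Informed RRT* and BIT*~\cite{GammellBS18,bit}.

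\textbf{Step 3: invoke RGG asymptotic optimality.} With the connection rule of Section~\ref{cPRBMT} ($k=O(\log n)$ with a sufficiently large constant, or $r=O((\log n/n)^{1/d})$ above the critical radius), I would cite the PRM*/RRT* results~\cite{rrtStar}. Since edges here are solutions of the linear(ized) boundary value problem under cost~\eqref{cf}, I would also cite the kinodynamic extension~\cite{webb2013kinodynamic,kinorrt*}.

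These results give that, if $\pi^*$ has positive clearance ($\delta$-robust optimal), then
$\Pr\left(\limsup_{q\to\infty} C^*_{\mathcal{G}_q}=c^*\right)=1.$
Combining this with Step 1 and $\cbest\ge c^*$ yields $\Pr(\lim_{q\to\infty}\cbest^{(q)}=c^*)=1$, which is asymptotic optimality.

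Probabilistic completeness then follows. Whenever a robustly feasible solution exists, the same result implies $\Pr(C^*_{\mathcal{G}_q}<\infty)\to1$, and Step 1 transfers this to \btit.

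\textbf{Main obstacle.} The delicate step is Step 1's claim that the first-intersection termination still yields the exact optimum of $\mathcal{G}_q$. Early stopping could leave $\cbest > C^*_{\mathcal{G}_q}$ for that batch. If Theorem~\ref{t1} is not strong enough here, I would fall back on a weaker monotonicity argument.

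In that fallback, any batch that stops early leaves the graph intact. Subsequent batches then re-run the search on a superset graph with the queues reset, and the MEET conditions eventually fire. I would then show that infinitely many batches terminate via the MEET conditions, or that an optimal path in $\mathcal{G}_q$ is recovered within finitely many later batches. Either suffices for the limit statement, since only $\lim\cbest$ matters and not per-batch exactness.
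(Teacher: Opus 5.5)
Your proposal follows the same route as the paper's proof. Theorem~\ref{t1} makes $C_{best}$ equal to the least-cost solution of $\mathcal{G}_q$ whenever the (MEET-style) termination fires, and almost-sure convergence of that graph optimum to $c^*$ then gives asymptotic optimality, with probabilistic completeness following from $C_{best}<\infty$. Your version is more explicit than the paper's two-line argument, which leaves several points implicit: the RGG convergence step, the monotonicity of $C_{best}$, the harmlessness of pruning and informed sampling, and the caveat that the early first-intersection stop need not be optimal in its batch. One refinement, which the paper also glosses over: for drift dynamics such as the double integrator, the RGG convergence should come from a kinodynamic result with cost-based neighborhoods, not from the holonomic PRM* radius $(\log n/n)^{1/d}$.
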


\begin{proof}
 According to Theorem~\ref{t1}, the cost \( C_{\text{best}} \) returned by any instance of \btit, converges almost surely to the optimal cost \( C(\sigma^*) \) as $q$ approaches to infinity. That is,
\[
        \mathbb{P} \left(\lim_{q\to\infty}\cbest(\pi_{q,\btit} ) =C_{\text{best}} = C(\pi^*)\right) = 1,
   \]
Therefore, \btit\ is asymptotically optimal. Since it returns a solution whenever one exists (i.e., \( C_{\text{best}} < \infty \)), it is also probabilistically complete.
\end{proof}
%\begin{itemize}

%\item Use a zero before decimal points: Ò0.25Ó, not Ò.25Ó. Use Òcm3Ó, not ÒccÓ. (bullet list)
%\end{itemize}
%\subsection{Initial Solutions of A Forward, Asymmetric Bidirectional, and Bidirectional Sampling-Based Path Planner } 

%If the non-mutual $k$-nearest neighbors strategy is used to find a valid path by a forward (BIT*), asymmetric bidirectional (AIT*/EIT*), and bidirectional search (\btit) with an admissible heuristic, the initial found solutions can be different. It is known that AIT*/EIT* provides a backward parent and \btit supplies a backward and forward parent. We show the cases as follows:

%Case1: A forward search finds a path from a given RGG. The initial solution found by asymmetric bidirectional or bidirectional search should not be larger than the forward search.

%Case2: A forward search cannot find an initial solution in a given RGG. The asymmetric bidirectional or the bidirectional search can still find an initial solution, leading to a longer initial path length. The right upper figure of Fig.~\ref{unhcm} shows that an initial solution found with non-mutual $k$-nearest neighbors by BIT* (1597.32), EIT* (1596.7), and \btit (1549.01) on UNH Campus Map. The path costs display that \btit returns a better initial solution than BIT* and EIT*.

\section{Experimental Results}

  \begin{figure}[h]
  % \vspace{-2mm}
    \centering
         \scalebox{0.8}{    \includegraphics[width=\linewidth,scale=1]{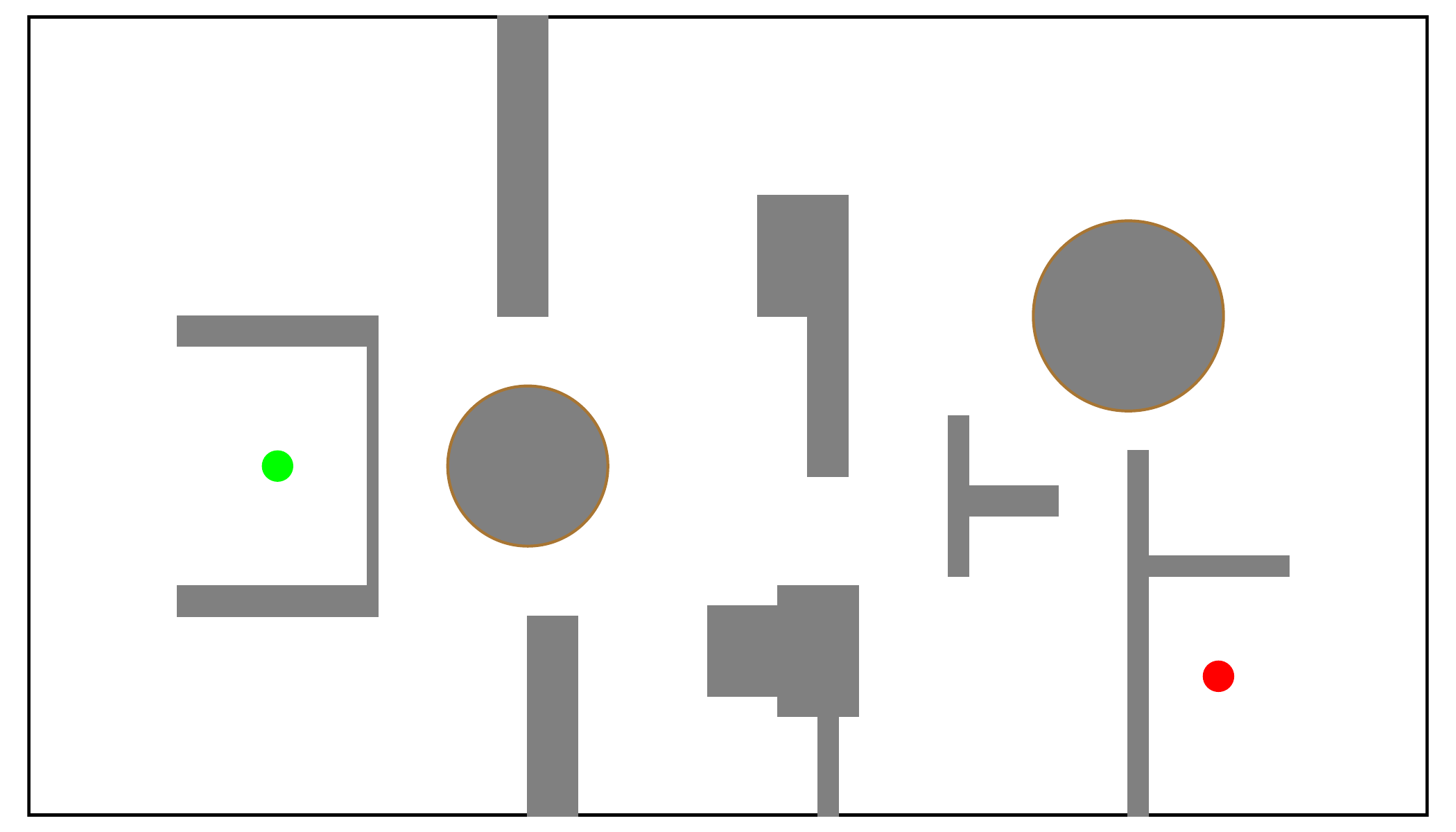}}
\caption{Experimental environment for the double-integrator robot (DIR) in a 2D workspace. This environment features enclosures, narrow passages, and multiple topological routes.}
    \label{DI}
\end{figure}

  \begin{figure}[h]
  % \vspace{-2mm}
    \centering
    \scalebox{0.64}{    \includegraphics[width=\linewidth,scale=1]{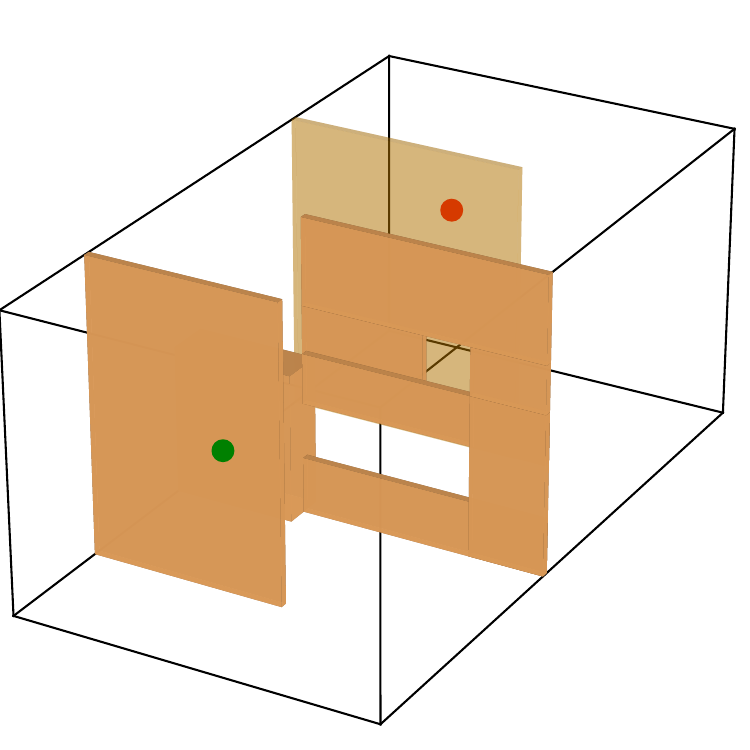}}

\caption{Experimental environment for the 10D linearized quadrotor (LQ) system in a 3D workspace. This environment features with non-convex corridors and narrow passages.}
    \label{lq}
\end{figure}

 \begin{figure*}
    \centering
    \subfloat[Double Integrator Robot ($\mathbb{R}^{4}$)]{
        \scalebox{0.48}{\includegraphics[width= \textwidth]{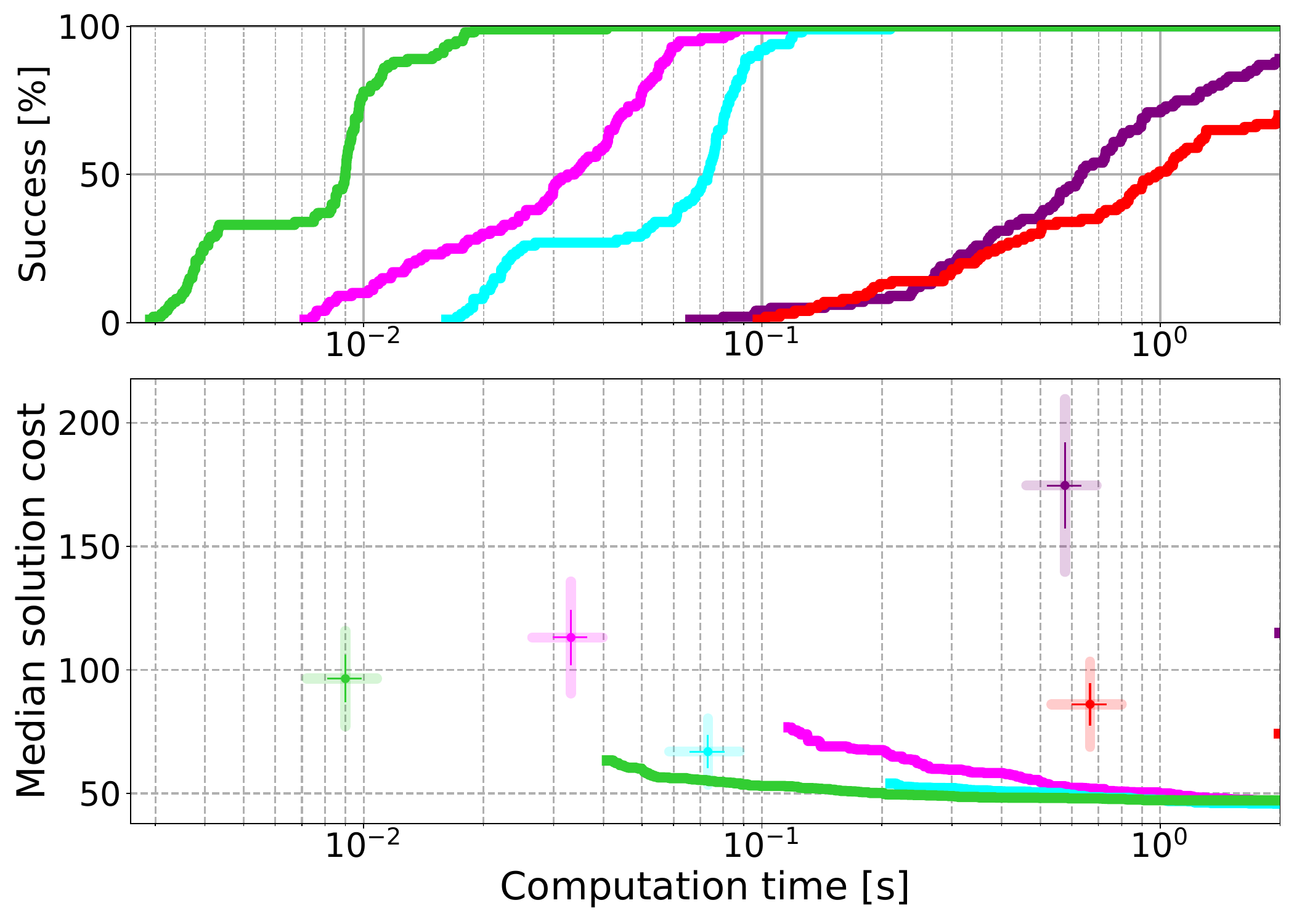} \label{fig:wall} }
    }
        \subfloat[Linearized Quadrotor ($\mathbb{R}^{10}$)]{
        \scalebox{0.48}{\includegraphics[width= \textwidth]{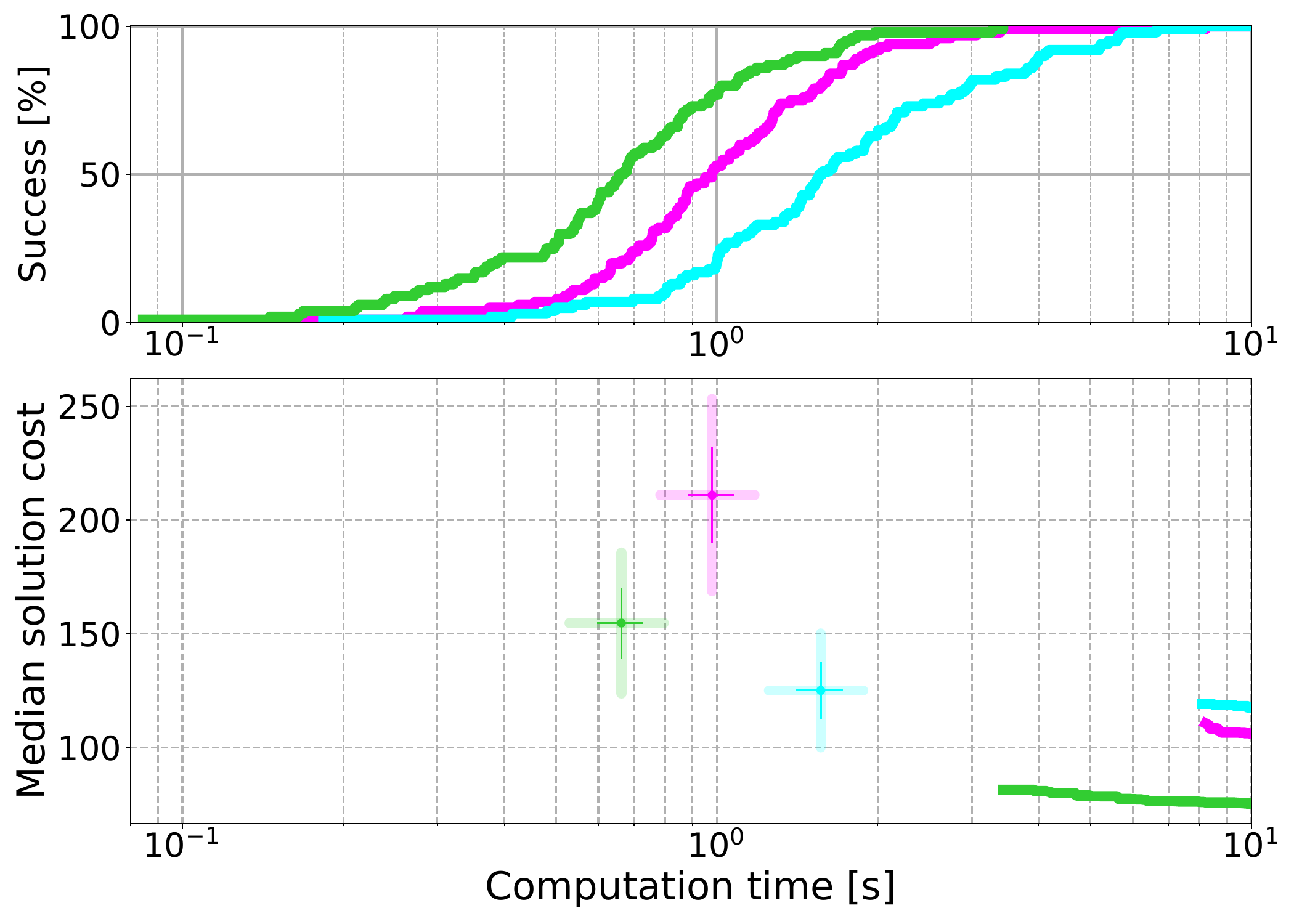} \label{fig:ex2} }
    }
    
    \includegraphics[width= \textwidth]{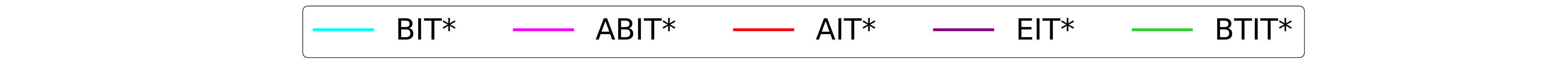}
    % \subfloat[Wall Gap In $\mathbb{R}^8$]{
    % %\scalebox{0.9}{
    %   \includegraphics[width=160mm,scale=1]{NewPaper.pdf}}
    %}  % start with (b)
  
%     \subfloat[Wall Gap In $\mathbb{R}^{16}$]{
%     %\scalebox{0.63}{
% \includegraphics[width=160mm,scale=1]{R16WALL.png}}
    %}
    \caption{Comparative performance of evaluated planners across different domains in terms of median solution cost and success rate over time, with 95$\%$ confidence interval in each plot. Note that the median solution cost is $\infty$ when the success rate of a planner is below 50$\%$. The dots in each plot denote median initial solution times and costs with 95$\%$ confidence intervals (CI), while shadows around the dots indicate a more conservative 99$\%$ CI.}
     
\end{figure*}

%  \begin{figure}[t]
%     \centering 
%     %\subfloat[]{}
%     \scalebox{0.9}{
%     \includegraphics[width=90mm,scale=0.8]{UNH.pdf}}

%     \caption{Initial and final search performances on selected Campus Map. We set cut = 0 to restrict the violin range within the range of our testing data. The violin plots display the 95$\%$ confidence interval (lines that extend from the center). The results show \btit\ outperforms EIT* in faster finding an initial solution and in prompting the convergence.}
%     \label{fig:VPUNH}  
% \end{figure}

We implement \btit\ in Open Motion Planning Library (OMPL)~\cite{sucan2012open} and compare its performance with other batch-wise sampling-based motion planners in OMPL, including BIT*, AIT, EIT*, and ABIT*. Our experimental evaluations across two kinodynamic systems: a Double Integrator Robot (4-D DIR) and a 10D Linearized Quadrotor (10-D~LQ). All parameters align with those presented in~\cite{EIT}. The $r$-disk-nearest neighbors connection strategy is used for the testing domains.
Each planner performs 100 trials, sampling per batch with allocated time per trail as follows:  200 states with 2s in LAB (4-D~DIR) and 300 states with 10s in LAB2 (10-D~LQ). To calculate the optimal trajectory between states, the fixed-final-state free-final-time optimal controller~\cite{kinorrt*} is used as a \emph{steering} function for kinodynamic systems. The heuristic value and the admissible edge cost are derived from the optimal controller. Notably, the green and red dots denote \si and \sg in each figure, respectively.  
All planners are run on a desktop with 16GB of RAM and Intel I7-7700k CPU running Ubuntu 20.04 and are implemented in C++ while and each undergoes 200 segments collision checks between states, maintaining consistent solution quality even when increased to 10,000. 
\subsection{2D-Double integrator robot}

The double-integrator robot (DIR) benchmark~\cite{webb2013kinodynamic,kinorrt*} is adopted as a simplified yet representative kinodynamic model of robotic motion: it exhibits second-order dynamics in which the control input is acceleration, requiring the planner to reason jointly over position and velocity evolution. The state is four-dimensional, $x=[l^{T}\;v^{T}]^{T}$, where $l\in\mathbb{R}^{2}$ denotes planar position and $v\in\mathbb{R}^{2}$ denotes planar velocity. All planners are evaluated under identical conditions in a $14\times 8~\mathrm{m}^{2}$ indoor laboratory workspace with bounded velocities $v\in[-2,2]^{2}~\mathrm{m/s}$ (Fig.~\ref{DI}), which constrains admissible motions and emphasizes dynamic feasibility. 

The lab setup is designed to stress SBMP with heuristic search via start and goal enclosures, narrow passages, irregular obstacle layouts, sharp turns, dead ends, and multiple homotopy classes. As the RGG densifies, increased connectivity inflates the branching factor, raising search effort and computational cost.

% The workspace is designed to stress sampling-based motion planning (SBMP) with heuristic graph search through several compounding challenges: (i) start and goal enclosures that restrict feasible departure and arrival directions, (ii) narrow passages that demand precise dynamic regulation for collision-free traversal, (iii) irregular obstacle layouts that degrade simple distance-to-go heuristics and induce locally attractive but globally suboptimal expansions, (iv) sharp turns that are difficult to negotiate under bounded velocity and acceleration, (v) dead-end structures that amplify unproductive exploration and backtracking, and (vi) multiple homotopy classes that yield diverse topological solution families. These challenges intensify as the random geometric graph (RGG) is densified: higher sampling density increases the number of vertices and candidate neighbor transitions, inflating the effective branching factor and thereby increasing search effort and computational cost.

% Additionally, this domain requires accurate handling of sharp turns and more complex planning for direction changes due to the velocity constraints. 

As demonstrated in Fig~\ref{DI}, EIT* ($89\%$) and AIT* ($70\%$) fail to reach a $100\%$ success rate compared to the other planners. Moreover, according to the defined metrics, \btit\ achieves a median initial search time that is $3.6\times$ and $8.1\times$ faster than ABIT* and BIT*, respectively.  

\subsection{10-D Linearized Quadrotor}
In Lab2 (Fig.~\ref{lq}), we evaluate a linearized quadrotor model as described in~\cite{kinorrt*}. The system has a 10D state space,
$x=[l^{T}\;v^{T}\;r^{T}\;w^{T}]^{T}$, comprising 3D position $l\in\mathbb{R}^{3}$, 3D velocity $v\in\mathbb{R}^{3}$, a 2D attitude parameterization $r\in\mathbb{R}^{2}$, and 2D angular velocity $w\in\mathbb{R}^{2}$.\footnote{\scriptsize For all planners, we discretize each state component within the following bounds: $l$ over an $18\times 10\times 8~\mathrm{m}^{3}$ workspace, $v\in[-2.5,2.5]^{3}~\mathrm{m/s}$, $r\in[-1.5,1.5]^{2}~\mathrm{rad}$, and $w\in[-4,4]^{2}~\mathrm{rad/s}$.} This domain is designed for the 10-D LQ setting and includes narrow, non-convex passages that induce connectivity bottlenecks. These bottlenecks both hinder sampling and degrade the accuracy of standard heuristics. As the RGG densifies, the resulting increase in branching and edge evaluations substantially amplifies computational cost.

% Lab2 is tailored to the 10D LQ setting and includes non-convex corridors and narrow passages (as discussed above) that induce severe connectivity bottlenecks. The concave geometry produces a systematic mismatch between heuristic guidance and feasible kinodynamic progress: heuristics based on geometric proximity or unconstrained LQ cost-to-go often favor states near concave boundaries, while valid trajectories must follow the corridor through a sequence of intermediate states that may temporarily increase heuristic cost. As a consequence, heuristic-guided search can spend substantial effort on locally attractive expansions near corridor entrances and exits before discovering the small set of gateway states that traverse the corridor. These effects are amplified as the RGG is densified, since open regions contribute many near-equivalent neighbors, inflating the branching factor and increasing the number of expansions and collision checks, leading to higher runtime.

 Compared to other planners, AIT* and EIT* fail due to the multiple queries to NN and frequent edge-cost calculations. 
As shown in Fig.~\ref{lq}, \btit\ finds the median initial solution 1.48$\times$ faster than ABIT*, while \btit\ achieves a 2.37$\times$ speedup compared to BIT*. 
%An experimental test video is found at  \textcolor{blue} {}.

 \section{Conclusions and Future Work}
In this paper, we present \btit, an asymptotically optimal sampling-based planner that leverages bidirectional heuristic search with a tighter termination condition. \btit\ performs bidirectional heuristic search on an RGG over informed states while maintaining and expanding edge frontiers. It also introduces \btit\, an anytime-tight termination condition for Bi-HS, yielding progressive improvements in search efficiency and sampling focus. Empirical results on realistic robotic models show that \btit\ outperforms representative non-lazy batch-wise SBMP planners in both time to first solution and convergence on the evaluated domains.

Future research will focus on leveraging bidirectional search for kinodynamic motion planning in complex environments, supported by experiments \cite{mu2016design}.

% Looking forward, motivated by the challenges of kinodynamic motion planning under nonlinear dynamics and dynamic obstacles, we aim to leverage bidirectional search effort to better guide forward search in these settings and to conduct experiments \cite{mu2016design}.

%\section*{Acknowledgement}
%We would like to  express our sincere gratitude to Prof. Wheeler Ruml for his comments on drafts of this work, and Debarpan Bhowmick for his help with experiments on Pixhawk 5X drone flight tests. We also want to thank the anonymous reviewers for their valuable feedback and suggestions.
      
%\addtolength{\textheight}{-12cm}   
% This command serves to balance the column lengths
                                  % on the last page of the document manually. It shortens
                                  % the textheight of the last page by a suitable amount.
                                  % This command does not take effect until the next page
                                  % so it should come on the page before the last. Make
                                  % sure that you do not shorten the textheight too much.

%%%%%%%%%%%%%%%%%%%%%%%%%%%%%%%%%%%%%%%%%%%%%%%%%%%%%%%%%%%%%%%%%%%%%%%%%%%%%%%%

%%%%%%%%%%%%%%%%%%%%%%%%%%%%%%%%%%%%%%%%%%%%%%%%%%%%%%%%%%%%%%%%%%%%%%%%%%%%%%%%

%%%%%%%%%%%%%%%%%%%%%%%%%%%%%%%%%%%%%%%%%%%%%%%%%%%%%%%%%%%%%%%%%%%%%%%%%%%%%%%%
%\section{Discussion}
   %In the real world, the searching environments are not always ideally sparse, which could make \btit\ fails, so we are planning to solve this issue in the future work va.

%\section*{ACKNOWLEDGE}

%%%%%%%%%%%%%%%%%%%%%%%%%%%%%%%%%%%%%%%%%%%%%%%%%%%%%%%%%%%%%%%%%%%%%%%%%%%%%%%%
\bibliographystyle{IEEEtran}
\bibliography{ICRA}

\end{document}